\newtheorem{theorem}{Theorem}
\definecolor{myblue}{RGB}{111, 42, 221}
\definecolor{myred}{RGB}{204, 72, 158} 
\definecolor{mycolor1}{RGB}{254, 223, 223} 
\definecolor{mycolor2}{RGB}{215, 241, 217} 
\definecolor{mycolor3}{RGB}{255, 251, 195} 
\newtcolorbox{mybox}{colback=white!5!white,colframe=black!75!black, left=.05in, right=.05in}
\title{DynamicMind: A Tri-Mode Thinking System for Large Language Models}
\author{
    \textbf{Wei Li\textsuperscript{1}}$^*$,
    \textbf{Yanbin Wei\textsuperscript{1,2}}\thanks{Equal contribution},
    \textbf{Qiushi Huang\textsuperscript{1,3}},
    \textbf{Jiangyue Yan\textsuperscript{1}},
    \textbf{Yang Chen\textsuperscript{1}},
    \\
    \textbf{James T. Kwok\textsuperscript{2}},
    \textbf{Yu Zhang\textsuperscript{1}}\thanks{\ \ Corresponding author} \\
    \\
    \textsuperscript{1}Southern University of Science and Technology
    \\
    \textsuperscript{2}Hong Kong University of Science and Technology
    \\
    \textsuperscript{3}University of Surrey
}
\begin{document}

\maketitle
\begin{abstract}
{Modern large language models (LLMs) often struggle to dynamically adapt their reasoning depth to varying task complexities, leading to suboptimal performance or inefficient resource utilization. To address this, we introduce \textbf{DynamicMind}, a novel tri-mode thinking system. DynamicMind empowers LLMs to autonomously select between Fast, Normal, and Slow thinking modes for zero-shot question answering (ZSQA) tasks through cognitive-inspired prompt engineering. Our framework's core innovations include: (1) expanding the established dual-process framework of fast and slow thinking into a  tri-mode thinking system involving a normal thinking mode to preserve the intrinsic capabilities of LLM; (2) proposing the \textbf{Thinking Density} metric, which aligns computational resource allocation with problem complexity; and (3) developing the \textbf{Thinking Mode Capacity (TMC)} dataset and a lightweight \textbf{Mind Router} to predict the optimal thinking mode. Extensive experiments across diverse mathematical, commonsense, and scientific QA benchmarks demonstrate that DynamicMind achieves superior ZSQA capabilities while establishing an effective trade-off between performance and computational efficiency. We will release our code and model checkpoint as soon as possible.
}

\end{abstract}
\section{Introduction}
\label{introduction}

In recent years, chain-of-thought (CoT) prompting techniques have significantly improved the zero-shot question-answering (ZSQA) capabilities of large language models (LLMs) by enabling step-by-step, deliberate reasoning to tackle complex tasks \cite{wei2022chain}. 
{Existing work \cite{guo2025deepseek} enhances LLMs' ability to solve complex problems, such as mathematical reasoning, by employing the \textit{slow thinking mode} through extensive intermediate reasoning processes based on CoT. However, some studies \cite{kojima2022large, NEURIPS2024_e304e04a} observe that for simpler tasks, like CommonsenseQA, LLMs perform better 
using the \textit{fast thinking mode}, which directly provides answers without intermediate reasoning steps, thereby significantly reducing computational costs, as exemplified in Figure \ref{fig:intro}. Those observations suggest the viability of employing dual-processing theory \cite{wason1974dual, tversky1974judgment}, 
drawing from human cognitive research on fast and slow thinking, to enhance the effectiveness and efficiency trade-off of LLM reasoning with the capabilities of fast and slow thinking.}

\begin{figure}[t]
    \centering
    \includegraphics[width=0.4\textwidth]{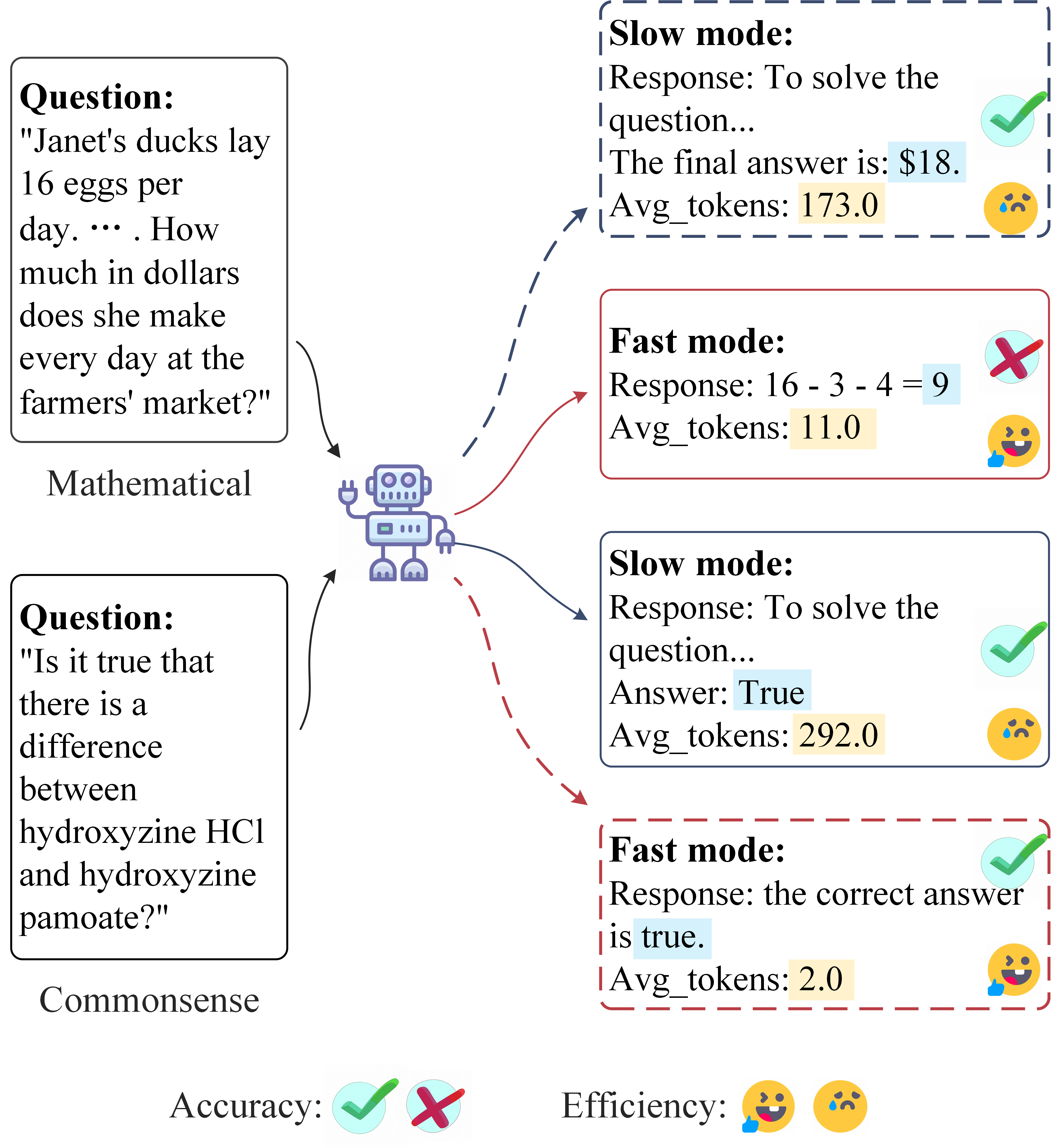}
\caption{Accuracy and efficiency trade-off for adopting fast or slow thinking mode in LLMs. The dashed line represents the ultimately preferred choice mode.
}
    \label{fig:intro}
    \vspace{-5pt}
\end{figure}

{Nevertheless, current dual-processing frameworks overlook the intrinsic reasoning capabilities acquired by LLMs during their original training, which we term the \textit{normal thinking mode}. Typically, LLMs are optimized to operate efficiently in their inherent normal mode, producing quick responses with moderate reasoning suitable for a broad range of tasks. For instance, in the GSM8K benchmark \citep{cobbe2021training}, we observe 80.52\% samples are more efficiently addressed by the normal mode than by either fast or slow modes. This is because LLMs already possess a foundational reasoning capability after their original training \cite{liu2024deepseek}, enabling them to solve a majority of problems without resorting to slow thinking. However, despite its general utility, the normal mode is not universally optimal. It can be less token-efficient than the fast mode for some simple tasks and may lack the profound reasoning capabilities of the slow mode when addressing highly complex problems.}

{LLMs can utilize three thinking modes with unique strengths. The fast mode is token-efficient for simple tasks like CommonsenseQA but may fail on complex problems such as math by favoring knowledge retrieval over computation. The slow mode ensures high accuracy for intricate, deep-reasoning tasks but incurs substantial token consumption, proving to be inefficient for simpler tasks. The normal mode, stemming from original training, seeks a balance between them, yet its inherent token consumption versus thinking depth trade-off is neither explicit nor optimally controlled. Our experimental results reveal that no single mode consistently optimizes both effectiveness and token efficiency across diverse tasks. Therefore, an adaptive system that can dynamically adjust thinking modes based on task complexity is crucial for enhancing the overall reasoning efficiency and versatility of LLMs.
}

To achieve that, {in this paper, we introduce \textbf{DynamicMind}, an adaptive framework extending dual-process theory into a tri-mode dynamic thinking system comprising \textit{fast, normal}, and \textit{slow} thinking modes for LLMs. The proposed DynamicMind system, guided by a lightweight \textbf{Mind Router}, dynamically adjusts its operational mode based on task complexity, and it features three key innovations. Firstly, we formalize the tri-mode thinking system, delineating the distinct impact of each mode on task performance. Secondly, we propose the \textbf{Thinking Density}, a mathematically grounded metric designed to quantify the trade-off between effectiveness and efficiency across these three thinking modes. Thirdly, we construct the \textbf{Thinking Mode Capacity (TMC)} dataset and develop the Mind Router, a lightweight module responsible of predicting the suitable thinking mode for LLMs.}
{Extensive empirical evaluations across 12 reasoning benchmarks, including mathematical, commonsense, and scientific tasks, demonstrate that DynamicMind effectively achieves an adaptive balance between the performance and efficiency. Furthermore, those advantages are shown to be able to generalize to unseen domains/tasks.} 

\begin{figure*}[htbp]
  \includegraphics[width=\textwidth]{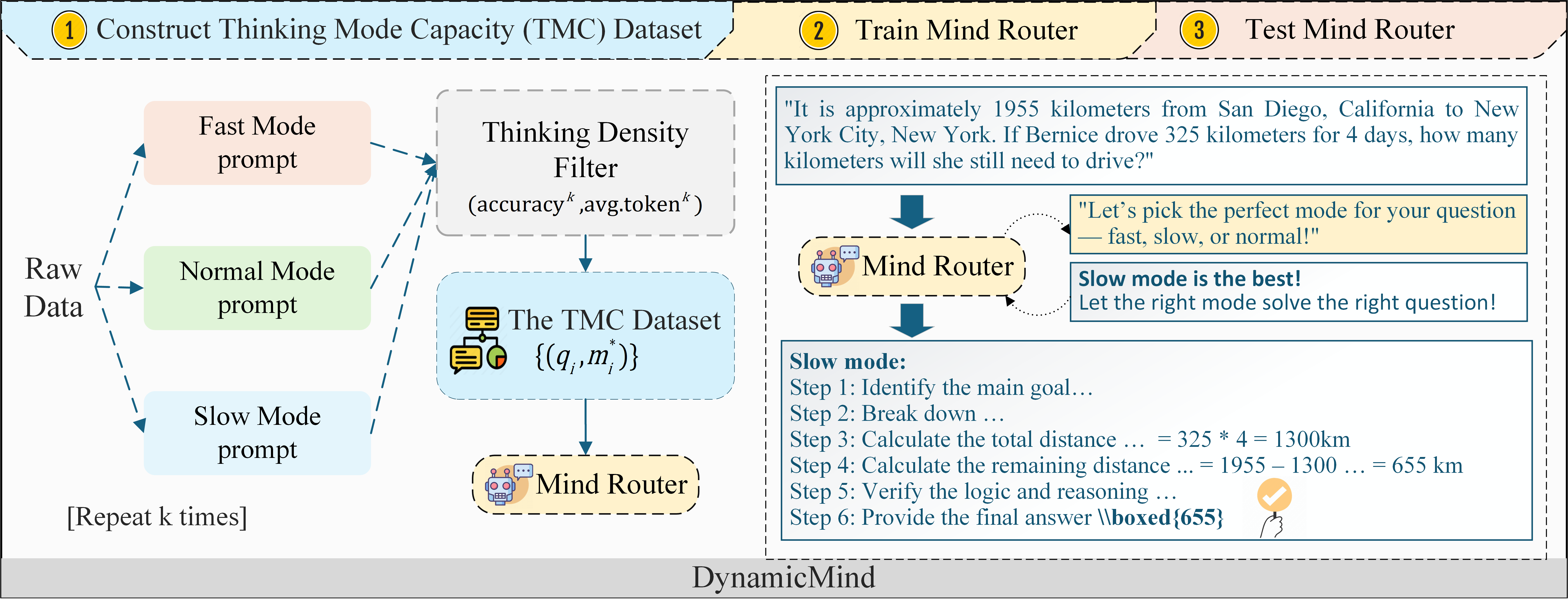}
  \caption{Overview of the DynamicMind framework, where the Mind Router guides the LLM Thinker to use the most appropriate thinking mode based on the question.}
  \label{fig:MD_framework}
  \vspace{-5pt}
\end{figure*}
\section{Related Work}
\label{related work}


\paragraph{Fast/Slow Thinking in Cognitive Science.}
The concept of dual-processing in human cognition, introduced by \citet{wason1974dual} and expanded by \citet{daniel2017thinking}, describes two systems: System 1 (fast, intuitive) and System 2 (slow, analytical). This framework has been influential in psychology and social sciences, providing insights into human decision-making \cite{frederick2005cognitive}.

\paragraph{Fast/Slow Thinking of LLMs.}

LLMs mimic human cognitive systems, with fast thinking aligning with System 1, excelling in quick, intuitive tasks but struggling with complex reasoning \citep{nye2021show}. Slow thinking mirrors System 2, using CoT reasoning to handle complex problems effectively \citep{chen2024not, qu2025survey}.
Existing methods mainly optimize slow thinking speed, such as compressed reasoning chains \citep{wang2023scott} and dynamic token skipping \citep{pan2024dynathink}, often neglecting fast thinking's potential. Other approaches like Dualformer \citep{su2024dualformer} and RL-based methods \citep{qu2025survey} attempt to configure both thinking modes. However, these approaches overlook the model's inherent task-based trade-off capabilities acquired during origin training. DynamicMind uses a lightweight, trainable Mind Router to dynamically switch a single LLM between fast, normal, and slow thinking modes, enhancing reasoning efficiency without extra training.

\paragraph{Efficient Thinking of LLMs.}

Dual-process thinking enhances cognitive efficiency. Fast thinking is advantageous for simple problems, avoiding overthinking, while slow thinking excels in complex tasks \citep{chen2024not}. Strategies to improve LLM reasoning efficiency include model-centric methods like CoT fine-tuning \citep{yu2024distilling} and RL frameworks \citep{shen2025dast}, but these are computationally intensive. Output-level strategies, such as Best-of-N and Majority Voting \citep{wu2025more}, enhance inference but increase computational costs. Input prompt methods instruct LLMs to operate with constrained token budgets \citep{han2024token}, though consistency challenges remain. Some approaches route queries to different LLMs based on difficulty \citep{ong2024routellm}, increasing resource demands.

\section{Methodology}
\label{methodology}

In this section, we present the proposed DynamicMind method.

\subsection{Problem Settings}
Zero-shot question answering (ZSQA) aims to develop a system capable of accurately answering questions without prior exposure to specific question-answer pairs. Unlike traditional QA systems that rely on extensive labeled datasets, ZSQA requires answering new questions based on the established linguistic understanding capabilities and parametric knowledge of models. This poses a significant challenge to the generalizability of existing LLM-based QA systems in handling unseen questions.

\begin{table}[!ht]
 \centering
 \footnotesize
 \caption{Characteristics of each thinking modes.}
 \label{tab:mode_spec}
 \begin{tabular}{cc}
 \toprule
 \textbf{Thinking Mode} & \textbf{Key Attributes} \\
 \midrule
 Fast Mode ($M_f$) & Intuitive, Limited Thinking Depth \\
 \midrule
 Normal Mode ($M_n$) & Native Capability, Balanced \\
 \midrule
 Slow Mode ($M_s$) & Analytical, Low Efficiency \\
 \bottomrule
 \end{tabular}
 \vspace{-15pt}
\end{table}

\begin{figure*}[!ht]
\begin{minipage}[b]{0.36\linewidth}
\centering
\includegraphics[width=\linewidth]{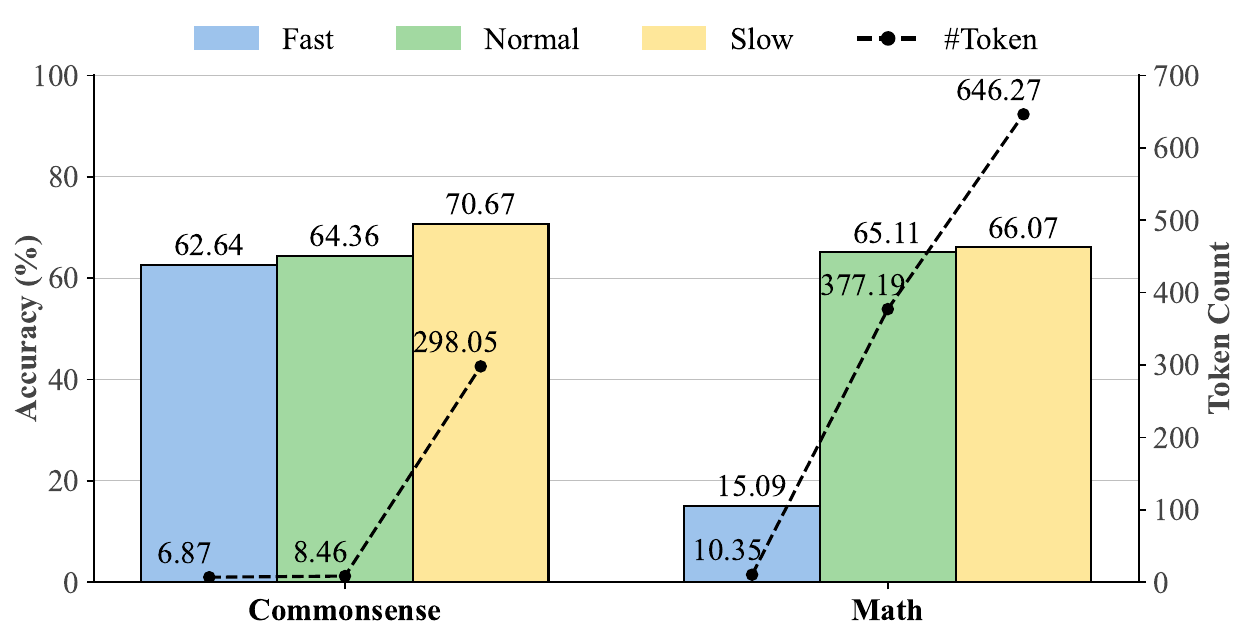}
\caption{Llama model's average accuracy and token consumption.}
\label{fig: llama_cost}
\end{minipage}  \hfill
\begin{minipage}[b]{0.36\linewidth}
\centering
\includegraphics[width=\linewidth]{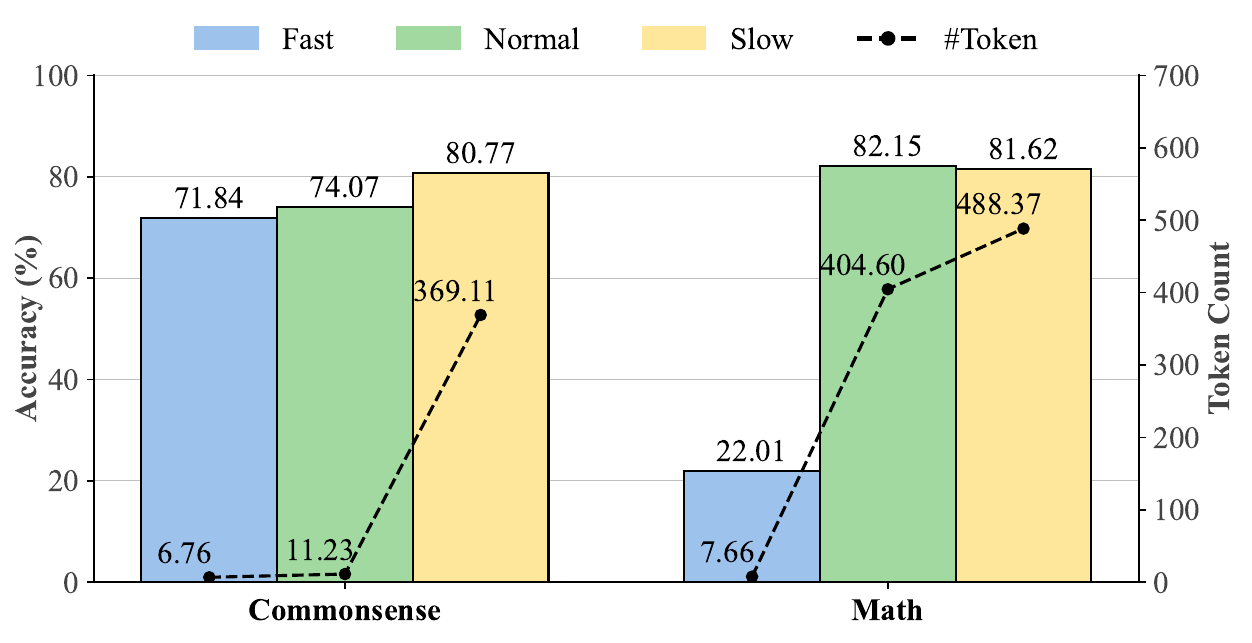}
\caption{Qwen model's average accuracy and token consumption consumption.}
\label{fig: qwen_cost}
\end{minipage} \hfill
\begin{minipage}[b]{0.25\linewidth}
\centering
\includegraphics[width=\linewidth]{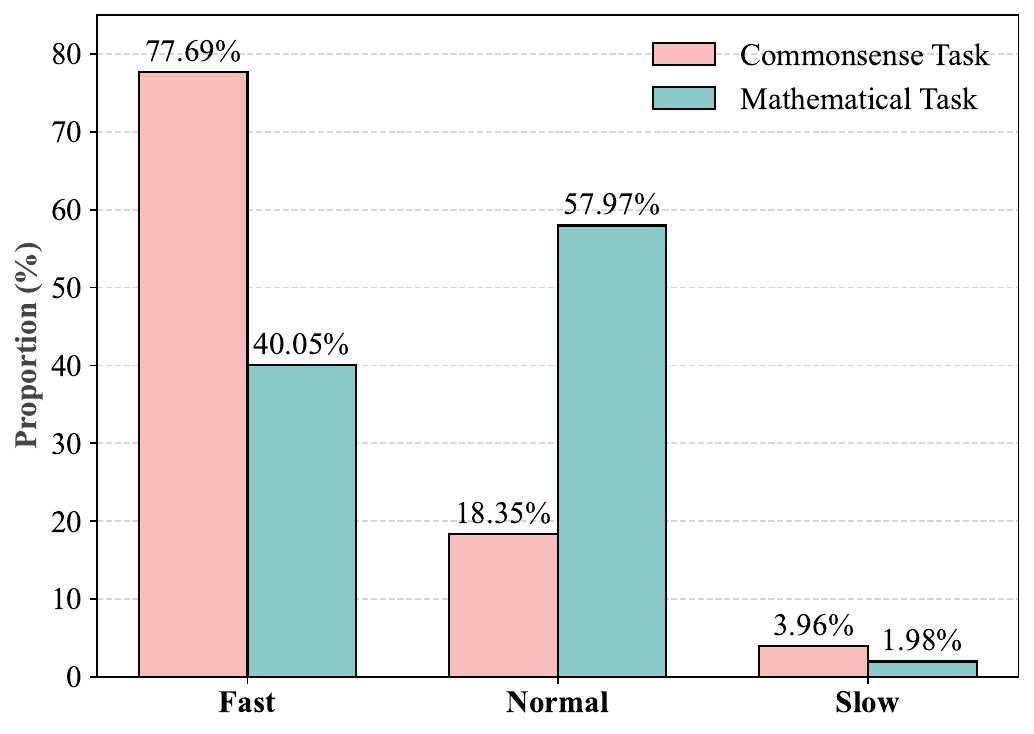}
\caption{The distribustion of TMC dataset.}
\label{fig: tmc_dis}
\end{minipage} 
\end{figure*}

\subsection{DynamicMind Framework}
The DynamicMind framework, illustrated in Figure \ref{fig:MD_framework}, introduces a cognitive control system that integrates prompt engineering with dynamic routing. It comprises two core components: (1) the \textit{LLM Thinker} with tri-mode thinking capabilities, and (2) the \textit{Mind Router} for adaptive mode selection. 

Emulating human cognition, the LLM Thinker operates in three distinct modes:  
\begin{itemize}
    \item \textbf{Fast Mode ($M_f$)}: Delivers rapid, intuitive responses by limiting cognitive depth, prioritizing speed over thorough reasoning.  
      
    \item \textbf{Slow Mode ($M_s$)}: Executes deep analytical reasoning for high-quality outputs with increased computational costs.  

    \item \textbf{Normal Mode ($M_n$)}: Leverages the LLM's native capabilities to balance response quality and efficiency.
\end{itemize}
Table \ref{tab:mode_spec} 
summarizes operational characteristics of the three modes. 
For an input question $q$, the Mind Router dynamically selects a suitable mode $m \in \{M_f, M_s, M_n\}$. Then the LLM Thinker gives an answer to $q$ in the selected mode $m$ within the zero-shot inference context, ensuring task-appropriate reasoning.

\subsection{Tri-Mode Thinking System}
The three modes $\mathcal{M} = \{M_f, M_n, M_s\}$ are achieved through distinct text templates $\mathcal{T}_m$ that shape the reasoning behavior of the LLM Thinker 
via explicit instructions. Specifically, for a question $q$, each mode $m \in \mathcal{M}$ generates an output sequence $y = (y_1, \ldots, y_T)$ via the following autoregressive process:  
\begin{equation}
P_m(y|q) = \prod_{t=1}^T P\left(y_t \mid \! \underbrace{\mathcal{T}_m}_{\text{\tiny Mode Prompt}} \! \oplus q\oplus y_{<t}; \theta\right),
\end{equation}
where $P_m(y|q)$ is the probability distribution over responses $y$ given $q$ under mode $m$, $\mathcal{T}_m$ denotes the prompt template encoding the reasoning strategy for mode $m$, $\oplus$ denotes the concatenation  operation, $y_{<t} = (y_1, \ldots, y_{t-1})$ are tokens generated before timestep $t$, and $\theta$ denotes the frozen parameters of the LLM Thinker.

To be specific, the three thinking modes are specifically designed as follows.
        
\begin{figure*}[h!t] 

\end{figure*}

\paragraph{Fast Mode: Quick and Intuitive Thinking.}
The fast mode prompts the LLM Thinker to provide direct answers without intermediate reasoning, similar to human System 1 thinking. This reduces computational load by minimizing token consumption \citep{wei2022chain, kojima2022large, NEURIPS2024_e304e04a}. The fast mode prompt template $\mathcal{T}_{\text{fast}}$ is constructed with specific constraints to ensure direct responses:
\[
\mathcal{T}_{\text{fast}} = \mathcal{T}_{\text{intuition}} \oplus \mathcal{T}_{\text{effortless}}.
\]
Details of the full prompt are put in Appendix \ref{app: fast temp}.

\paragraph{Slow Mode: Long and Analytical Thinking.}
The slow mode enhances the performance for complex problems by extending reasoning steps, similar to human System 2 thinking \cite{wang2022self, yao2024tree, besta2024graph}. The prompt template $\mathcal{T}_{\text{slow}}$ includes: 1) $\mathcal{T}_{\text{Decomposition}}$: it systematically breaks down complex problems into minimal executable units, allowing the model to tackle each part individually.
2) $\mathcal{T}_{\text{Quality}}$: it ensures that reasoning steps are high-quality, concise, and supported by context, which helps maintain logical coherence and depth.
3) $\mathcal{T}_{\text{Verification}}$: it identifies and corrects potential errors in reasoning steps, ensuring connectivity and accuracy throughout the process. That is, 
\[
\mathcal{T}_{\text{slow}} = \mathcal{T}_{\text{Decomposition}} \oplus  \mathcal{T}_{\text{Quality}} \oplus \mathcal{T}_{\text{Verification}}.
\]
This mode is particularly effective for complex tasks requiring detailed analysis and mitigates errors made by other faster modes. Details of full prompts are shown in Appendix \ref{app: slow temp}.

\paragraph{Normal Mode: Preserving Native Capabilities.}
The normal mode maintains the model's original thinking paradigm from pretraining, serving as a baseline. The prompt consists of minimal task-specific instructions:
\[
\mathcal{T}_{\text{normal}} = \mathcal{T}_{\text{instruction}}.
\]
This mode preserves refined thinking patterns, enables fair comparisons, and ensures compatibility with other techniques without interference from the fast and slow modes.

\paragraph{Distinct Capabilities of Individual Thinking Modes.}
To explore the different abilities of individual LLM Thinking modes, we evaluate their ZSQA performance by accuracy and measure their efficiency by output token consumption. To be more representative, we perform comparison on both commonsense QA and mathematical reasoning.

Figures \ref{fig: llama_cost} and \ref{fig: qwen_cost} illustrate the average accuracy and token consumption for three thinking modes using Llama3.1 \citep{dubey2024llama} and Qwen2.5 \citep{qwen2.5} as base LLMs. Key observations include:\\ 
\noindent 1) No single mode $m_{opt} \in \mathcal{M}$ consistently outperforms others in both effectiveness and token consumption across all tasks.\\
\noindent 2) Fast mode is highly efficient with minimal tokens and achieves reasonable accuracy on commonsense tasks, but struggles with complex math reasoning, indicating strength in knowledge retrieval over intensive computation.\\
\noindent 3) Slow mode achieves the highest overall accuracy but incurs high token consumption, excelling in depth and difficult problems but sacrificing efficiency for simpler questions. Besides, in math QA, we notice it is more easier to fail on simpler questions with redundant thinking steps (i.e., overthinking phenomenon revealed in \cite{chen2024not, sui2025stop}).\\
\noindent 4) As the reflection of the model's intristic ability from pretraining, normal mode strikes a balance, enhancing commonsense task accuracy over fast mode with minimal token increase and performing nearly as well as slow mode on mathematical tasks with better efficiency. This demonstrate a potential trade-off between reasoning depth and token consumption efficiency has been learned by the model pretraining. However, such trade-off ability is not explicit, as well as not good enough.  


Inspired by these observations, we then propose a routing-based approach to integrate the united pros of all these three thinking modes.

\subsection{Thinking Density and TMC Dataset}

In this section, we introduce a new metric called \textit{Thinking Density}, which evaluates the capability of diverse thinking modes of the LLM Thinker. Based on this metric, we construct the Thinking Mode Capacity (TMC) dataset from existing data, which is the first dataset revealing the mapping between questions and their preferred thinking patterns.

\paragraph{Thinking Density.} Given a question $q$ and the current thinking mode $m \in \mathcal{M}$ of the LLM Thinker, where $\mathcal{M} = \{M_f, M_n, M_s\}$ (representing fast, normal, and slow modes, respectively), the \textit{Thinking Density} with $k$ random runs is defined as:
\begin{equation}
\label{thinkdensity}
    E_m^k(q) = \frac{\text{accuracy}^k_m(q)}{(\text{avg.tok}^k_m(q))^\alpha},
\end{equation}
where $\text{accuracy}^k_m(q)$ is the accuracy of the model, defined as the number of times the model generates the correct answer divided by $k$. $\text{avg.tok}^k_m(q)$ is the average token consumption in the $k$ responses of the LLM Thinker in mode $m$. The hyperparameter $\alpha$ governs the trade-off between accuracy and efficiency by balancing the importance of correct answers against the computational cost of generating responses.
\paragraph{The TMC Dataset.}
Based on the \textit{Thinking Density} metric,
we construct the TMC dataset which maps questions to their preferred thinking mode. Specifically, we extract total 39K questions 
from the training split of in-domain datasets we used in Section \ref{exp:datasets}. For each question $q$, we identify the most effective thinking mode, denoted by $m^*_q$, as
\begin{equation}
m^*_q = \arg\max_{m \in \mathcal{M}} E^k_m(q)\label{equ_m_q}.
\end{equation}
Eq. \eqref{equ_m_q} selects the thinking mode $m$ that maximizes the Thinking Density $E^k_m(q)$ for the given question $q$. By coupling each question $q$ with its corresponding suitable thinking mode $m^*_q$, we set $k=10$ and $\alpha = 1$ to create the TMC dataset, denoted as $\mathcal{D}_{\text{TMC}} = \{(q_i, m_i^*)\}$.

The TMC dataset is a valuable resource for understanding the relationship between different types of questions and their suitable thinking modes. The statistics of label distribution for specific question types can reveal task-level thinking patterns. As shown in Figure \ref{fig: tmc_dis}.(c), there does not exist any single mode dominant the others upon all questions, highlighting the necessity of thinking mode selection. Besides, the mathematical tasks tends to utilize the Normal and Slow Mode rather than the Fast Mode, which is consistent with the fact that mathematical problems are typically thought-intensive, requiring multiple reasoning step \citep{boaler2022mathematical,guan2025rstarmath}. In contrast, the commonsense tasks shows a dominant preference for the Fast Mode. This suggests that commonsense questions are more inclined towards fast and intuitive answering. This observation can analogy to the findings in \citep{turk2009neural,nelli2023neural} that reveal that the human brain can handle commonsense knowledge via fast knowledge assembly and efficient replay in a short neural circuit. Details of the TMC dataset construction method are in Appendix \ref{app: tmc construction details}.
\subsection{Mind Router}
\label{mind_router}
The Mind Router constitutes the core decision-making module that dynamically selects a proper thinking mode $R_q$ for each question $q$. Building on our Thinking Density metric $E^k_m(q)$, we formalize the routing strategy through a dual-objective optimization framework that achieves Pareto optimality in accuracy-efficiency tradeoffs.

\begin{table*}[!h]
    \centering
    \caption{Performance comparison between DynamicMind and baselines across in-domain and out-of-domain tasks.  \textit{ACC}, \textit{\#Token}, and \textit{TD} refer to accuracy (\%), token consumption, and thinking density, respectively.} 
    \setlength{\tabcolsep}{6pt}
    \vskip -0.10in
    \resizebox{\linewidth}{!}{
    \begin{tabular}{lccccccccccccc}
    \toprule
     & & \multicolumn{4}{c}{\textbf{In-domain tasks}} & & \multicolumn{4}{c}{\textbf{Out-of-domain tasks}} & & \multicolumn{2}{c}{\textbf{Average}} \\
    \cmidrule(lr){3-6} \cmidrule(lr){8-11} \cmidrule(lr){13-14}
    \textbf{Method} & & \multicolumn{2}{c}{Math} & \multicolumn{2}{c}{CommonsenseQA} & & \multicolumn{2}{c}{MMLU} & \multicolumn{2}{c}{ScienceQA} & &  &  \\
    & & ACC$\uparrow$(\#Token$\downarrow$) & TD$\uparrow$ & ACC$\uparrow$(\#Token$\downarrow$) & TD$\uparrow$ & & ACC$\uparrow$(\#Token$\downarrow$) & TD$\uparrow$ & ACC$\uparrow$(\#Token$\downarrow$) & TD$\uparrow$ & & ACC$\uparrow$(\#Token$\downarrow$) & TD$\uparrow$ \\
    \midrule
    \multicolumn{14}{c}{\textit{Llama-3.1-Instruct-8B}} \\
    \midrule
    CoT & & \textbf{66.28}(425.74) & {0.16} 
        & 62.99(184.92) & 0.34
        & & {52.02}(290.51) & 0.18
        & 66.23(170.64) & 0.39
        & & 61.88(267.95) & 0.27 \\
    PBC & & 65.07(\underline{377.69}) & \underline{0.17} 
        & \textbf{69.98}(188.99) & \underline{0.37}
        & & \textbf{56.74}(\underline{251.19}) & \underline{0.23}
        & \textbf{73.51}(194.68) & 0.38
        & & \textbf{66.33}(253.14) & 0.29 \\
    TALE-EP & & \underline{65.15}(668.13) & 0.10 
        & 62.47(\underline{178.22}) & 0.35
        & & 50.81(350.51) & 0.14
        & 68.01(\underline{169.02}) & \underline{0.40}
        & & 61.61(341.47) & 0.25 \\
    DynamicMind
    & & 62.65(\textbf{264.37}) & \textbf{0.24}
        & \underline{63.91}(\textbf{30.32}) & \textbf{2.11}
        & & \underline{52.48}(\textbf{34.17}) & \textbf{1.54}
        & \underline{70.36}(\textbf{49.32}) & \textbf{1.43}
        & & \underline{62.35}(\textbf{94.55}) & \textbf{1.33} \\
    \midrule
    \multicolumn{14}{c}{\textit{Qwen-2.5-Instruct-7B}} \\
    \midrule
    CoT & & \textbf{82.15}(433.02) & 0.19 
        & \textbf{80.64}(214.82) & 0.38
        & & \textbf{73.80}(314.86) & 0.23
        & \textbf{79.83}(217.98) & 0.37
        & & \textbf{79.11}(295.17) & 0.29 \\
    PBC & & \underline{82.01}(403.13) & 0.20
        & 78.91(\underline{50.85}) & \underline{1.55}
        & & 70.58(202.12) & 0.35
        & 74.41(72.08) & 1.03
        & & 76.48(182.05) & 0.78 \\
    TALE-EP & & 64.25(\textbf{179.94}) & \textbf{0.36}
        & 73.16(\textbf{46.06}) & \textbf{1.59}
        & & 71.25(\underline{67.23}) & \underline{1.06}
        & {76.49}(\textbf{38.52}) & \textbf{1.99}
        & & 71.29(\textbf{82.94}) & \textbf{1.25} \\
    DynamicMind
    & & 81.54(\underline{394.24}) & \underline{0.21} 
        & \underline{79.41}(51.84) & 1.53
        & & \underline{72.12}(\textbf{49.04}) & \textbf{1.47}
        & \underline{77.10}(\underline{55.16}) & \underline{1.40}
        & & \underline{77.54}(\underline{137.57}) & \underline{1.15} \\
    \bottomrule
    \end{tabular}
    }
    \label{tab:main_results}
    \vspace{-5pt}
\end{table*}

\begin{table*}[!h]
    \centering
    
    \caption{Performance comparison of the DynamicMind framework with MindRouter versus single thinking modes. \textit{ACC}, \textit{\#Token}, and \textit{TD} refer to accuracy (\%), token consumption, and thinking density, respectively.} 
   
    \setlength{\tabcolsep}{6pt}
    \vskip -0.10in
    \resizebox{\linewidth}{!}{
    \begin{tabular}{lccccccccccccc}
    \toprule
     & & \multicolumn{4}{c}{\textbf{In-domain tasks}} & & \multicolumn{4}{c}{\textbf{Out-of-domain tasks}} & & \multicolumn{2}{c}{\textbf{Average}} \\
    \cmidrule(lr){3-6} \cmidrule(lr){8-11} \cmidrule(lr){13-14}
    \textbf{Method} & & \multicolumn{2}{c}{Math} & \multicolumn{2}{c}{CommonsenseQA} & & \multicolumn{2}{c}{MMLU} & \multicolumn{2}{c}{ScienceQA} & &  &  \\
    & & ACC$\uparrow$(\#Token$\downarrow$) & TD$\uparrow$ & ACC$\uparrow$(\#Token$\downarrow$) & TD$\uparrow$ & & ACC$\uparrow$(\#Token$\downarrow$) & TD$\uparrow$ & ACC$\uparrow$(\#Token$\downarrow$) & TD$\uparrow$ & & ACC$\uparrow$(\#Token$\downarrow$) & TD$\uparrow$ \\
    \midrule
    \multicolumn{14}{c}{\textit{Llama-3.1-Instruct-8B}} \\
    \midrule
    DynamicMind \\
     \ \ \ \ \ \ \ \textit{Fast-Only} & & 15.09(\textbf{10.35}) & \textbf{1.46} 
    & 62.64(\textbf{6.87}) & \textbf{9.12}
    & & 51.85(\textbf{6.90}) & \textbf{7.52}
    & \underline{71.72}(\textbf{7.01}) & \textbf{10.23}
    & & 50.32(\textbf{7.78}) & \textbf{7.08} \\
     \ \ \ \ \ \ \ \textit{Normal-Only} & & \underline{65.11}(377.19) & 0.17 
    & \underline{64.36}(\underline{8.46}) & \underline{7.61}
    & & 49.32(35.70) & 1.38
    & 69.15(\underline{43.31}) & \underline{1.60}
    & & 61.99(116.16) & \underline{2.69} \\
     \ \ \ \ \ \ \ \textit{Slow-Only} & & \textbf{66.07}(646.27) & 0.10 
    & \textbf{70.67}(298.05) & 0.24
    & & 59.72(405.95) & 0.15
    & \textbf{73.99}(283.33) & 0.26
    & & \textbf{67.61}(408.40) & 0.19 \\
     \ \ \ \ \ \ \ \textit{MindRouter} & & 62.65(\underline{264.37}) & \underline{0.24}
        & {63.91}({30.32}) & {2.11}
        & & \underline{52.48}(\underline{34.17}) & \underline{1.54}
        & 70.36(49.32) & 1.43
        & & \underline{62.35}(\underline{94.55}) & 1.33 \\
    \midrule
    \multicolumn{14}{c}{\textit{Qwen-2.5-Instruct-7B}} \\
    \midrule
    DynamicMind \\
    
     \ \ \ \ \ \ \ \textit{Fast-Only} & & 22.01(\textbf{7.66}) & \textbf{2.87} 
    & 71.84(\textbf{6.76}) & \textbf{10.62}
    & & 69.50(\textbf{7.02}) & \textbf{9.90}
    & 72.23(\textbf{6.98}) & \textbf{10.35}
    & & 58.90(\textbf{7.11}) & \textbf{8.44} \\
     \ \ \ \ \ \ \ \textit{Normal-Only} & & \textbf{82.15}(404.60) & 0.20 
    & 74.07(\underline{11.23}) & \underline{6.60}
    & & \underline{72.70}(56.66) & 1.28
    & 76.29(63.65) & 1.20
    & & 76.30(\underline{134.04}) & \underline{2.32} \\
     \ \ \ \ \ \ \ \textit{Slow-Only} & & \underline{81.62}(488.37) & 0.17 
    & \textbf{80.77}(369.11) & 0.22
    & & \textbf{73.10}(481.42) & 0.15
    & \textbf{80.12}(361.85) & 0.22
    & & \textbf{78.90}(425.19) & 0.19 \\
    \ \ \ \ \ \ \ \textit{MindRouter} & & 81.54(\underline{394.24}) & \underline{0.21} 
        & \underline{79.41}(51.84) & 1.53
        & & 72.12(\underline{49.04}) & \underline{1.47}
        & \underline{77.10}(\underline{55.16}) & \underline{1.40}
        & & \underline{77.54}(137.57) & 1.15 \\
    \bottomrule
    \end{tabular}
    }  \label{tab:main_results_single_mode}
    \vspace{-10pt}
\end{table*}

\paragraph{Pareto Optimal Routing}
For each question $q$, we define
1) Accuracy objective: $\text{Acc}^k_m(q) = \log \text{(accuracy}^k_m(q))$. 2) Efficiency objective: $\text{Eff}^k_m(q) = -\log(\text{avg.tok}^k_m(q))$. Therefore, according to its definition (equation \ref{thinkdensity}), the logrithm form of the Thinking Density $\text{log}(E^k_m(q)) = \text{Acc}^k_m(q) + \alpha\text{Eff}^k_m(q)$. Both objectives become better when they are larger. With taking the logarithm does not change the relative size of numerical values, we have a routing strategy $R$ \textit{Pareto dominates} $R'$ if:
\begin{equation}
    \begin{cases}
    \mathbb{E}_q[\text{Acc}^k_{R_q}(q)] \geq \mathbb{E}_q[\text{Acc}^k_{R'_q}(q)] \\
    \mathbb{E}_q[\text{Eff}^k_{R_q}(q)] \geq \mathbb{E}_q[\text{Eff}^k_{R'_q}(q)],
    \end{cases}
\end{equation}
with strict inequality in at least one objective. The Pareto frontier contains all non-dominated strategies. Based on this definition, we demonstrate the pareto optimality of the dynamic routing beyond all the three single thinking modes by providing a detailed theorem with proof in Appendix \ref{app:proof}.

\paragraph{Router Training.}
We treat the Mind Router as a classification model $\text{MR}_\phi(q): \mathcal{Q}\mapsto\mathcal{M}$ and train it on the proposed TMC dataset $\mathcal{D}_{\text{TMC}} = \{(q_i, m_i^*)\}$, where $m_i^* = \arg\max_{m \in \mathcal{M}} E^k_m(q_i)$. The cross-entropy loss:
\begin{equation}
\mathcal{L}(\phi) = -\mathbb{E}_{(q,m^*)\sim\mathcal{D}_{\text{TMC}}}[\log p_\phi(m^*|q)]
\end{equation}
enables the router to approximate the theoretically optimal $R^*_q$. 

\section{Experiments}

In this section, we empirically evaluate the proposed DynamicMind framework.

\subsection{Experimental Setup}
\label{exp:setup}
\paragraph{Datasets.}
\label{exp:datasets}
We evaluate DynamicMind on both in-domain and out-of-domain datasets for ZSQA. In-domain datasets include GSM8K \citep{cobbe2021training} and MATH \citep{DBLP:conf/nips/HendrycksBKABTS21} for mathematical reasoning, as well as commonsense QA datasets such as BoolQ \citep{clark2019boolq}, PIQA \citep{bisk2020piqa}, SIQA \citep{sap2019socialiqa}, HellaSwag \citep{zellers2019hellaswag}, WinoGrande \citep{sakaguchi2021winogrande}, ARC-e, ARC-c \citep{clark2018think}, and OBQA \citep{mihaylov2018can}. Out-of-domain datasets, including MMLU \citep{DBLP:conf/iclr/HendrycksBBZMSS21} and ScienceQA \citep{lu2022learn}, present greater challenges with domain-specific questions spanning Biology, Physics, Chemistry, Medicine, and more. Dataset statistics are detailed in Appendix \ref{app: datasets statistics}.
\paragraph{Baselines and Metrics.}
We compare DynamicMind against three baselines: 1) Vanilla Chain-of-Thought (CoT) \citep{wei2022chain}, which uses step-by-step reasoning prompts; 2) Prototype-Based Clustering (PBC) \citep{reimers-2019-sentence-bert}, which selects modes based on cosine similarity with semantic centroids; and 3) TALE-EP \citep{han2024token}, which predicts token budgets to constrain reasoning length. For each question, the LLM Thinker is evaluated $k=3$ times, with the average accuracy and response token consumption reported across trials. Implementation details of all methods are provided in Appendix \ref{app: imp details}.
\subsection{Main Results}
Table \ref{tab:main_results} demonstrates the effectiveness (accuracy) and efficiency (token consumption) comparison among DynamicMind and baselines. As can be seen, with MindRouter,
DynamicMind exhibits a consistent ability to outperform baseline methods by striking a thoughtful balance between accuracy and efficiency. When use Llama as the base model, it achieves strong performance across CommonsenseQA (63.91\%), MMLU (52.48\%), and ScienceQA (70.36\%) while consuming fewer tokens, resulting in TDs of 2.11, 1.54, and 1.43, respectively, with an average TD of 1.33—significantly surpassing CoT (TD 0.27), PBC (TD 0.29), and TALE-EP (TD 0.25). Similarly, when use Qwen, it achieves higher accuracy in ScienceQA (77.1\% vs. PBC's 74.41\% and TALE-EP's 76.49\%) while maintaining efficient token consumption, showing an exceptional accuracy-cost trade-off across tasks.

To illustrate the role of MindRouter, we include metrics for individual thinking modes (i.e., Fast-Only, Normal-Only, and Slow-Only) in Table \ref{tab:main_results_single_mode}. The results reveal that Fast Mode minimizes token consumption but suffers from limited accuracy, particularly in precision-critical tasks like Math. In contrast, Normal and Slow Modes improve accuracy but incur significantly higher token consumption. DynamicMind, equipped with MindRouter, effectively coordinates these thinking modes to achieve an optimal balance—delivering accuracy comparable to the most accurate mode (e.g., Slow Mode) while drastically reducing token consumption. This adaptive trade-off makes DynamicMind provide proper performances to balance accuracy and efficiency. 

\subsection{MindRouter Transferability for LLMs}

Table \ref{tab:transfer} summarizes the transferability of Mind Routers across LLMs. When Llama used Qwen's router, token consumption generally decreased across tasks, but accuracy dropped for most tasks except for a slight improvement in mathematical reasoning. Conversely, when Qwen adopted Llama's router, token consumption varied, with CommonsenseQA showing a notable increase ($\text{+115}$ tokens), while accuracy declined across all tasks. On average, cross-model router transfer resulted in a ~2\% accuracy drop and token fluctuations of ±30 tokens. These results indicate that while Mind Routers exhibit a degree of transferability, differences in reasoning capabilities between LLMs limit their effectiveness when transferred.
\begin{table}[h!]
    \centering
    \caption{{Performance of the Mind Router transferred from one LLM to another.}} 
    \setlength{\tabcolsep}{6pt}
    \resizebox{\linewidth}{!}{
    \begin{tabular}{lccccc}
    \toprule
    \textbf{Metric} & Math & CommonsenseQA & MMLU & ScienceQA & \textbf{Avg.} \\
    \midrule
    \multicolumn{6}{c}{\textit{Transfer Qwen's MindRouter to Llama}} \\
    \midrule
    $\Delta \text{ACC} \uparrow$ & +0.71 & -5.68 & -1.26 & -0.50 & -1.68 \\
    $\Delta \text{\#Token} \downarrow$ & -10.57 & -91.79 & -17.25 & +9.07 & -27.64 \\
    \midrule
    \multicolumn{6}{c}{\textit{Transfer  Llama's MindRouter to  Qwen}} \\
    \midrule
    $\Delta \text{ACC} \uparrow$ & -1.62 & -5.92 & -0.26 & -3.32 & -2.78 \\
    $\Delta \text{\#Token} \downarrow$ & -10.99 & +115.00 & +17.95 & -20.46 & +25.38 \\

    \bottomrule
\end{tabular}

}
\vspace{-6pt}
\label{tab:transfer}
\end{table}
\begin{figure*}[h]
    \centering
\includegraphics[width=1\textwidth]{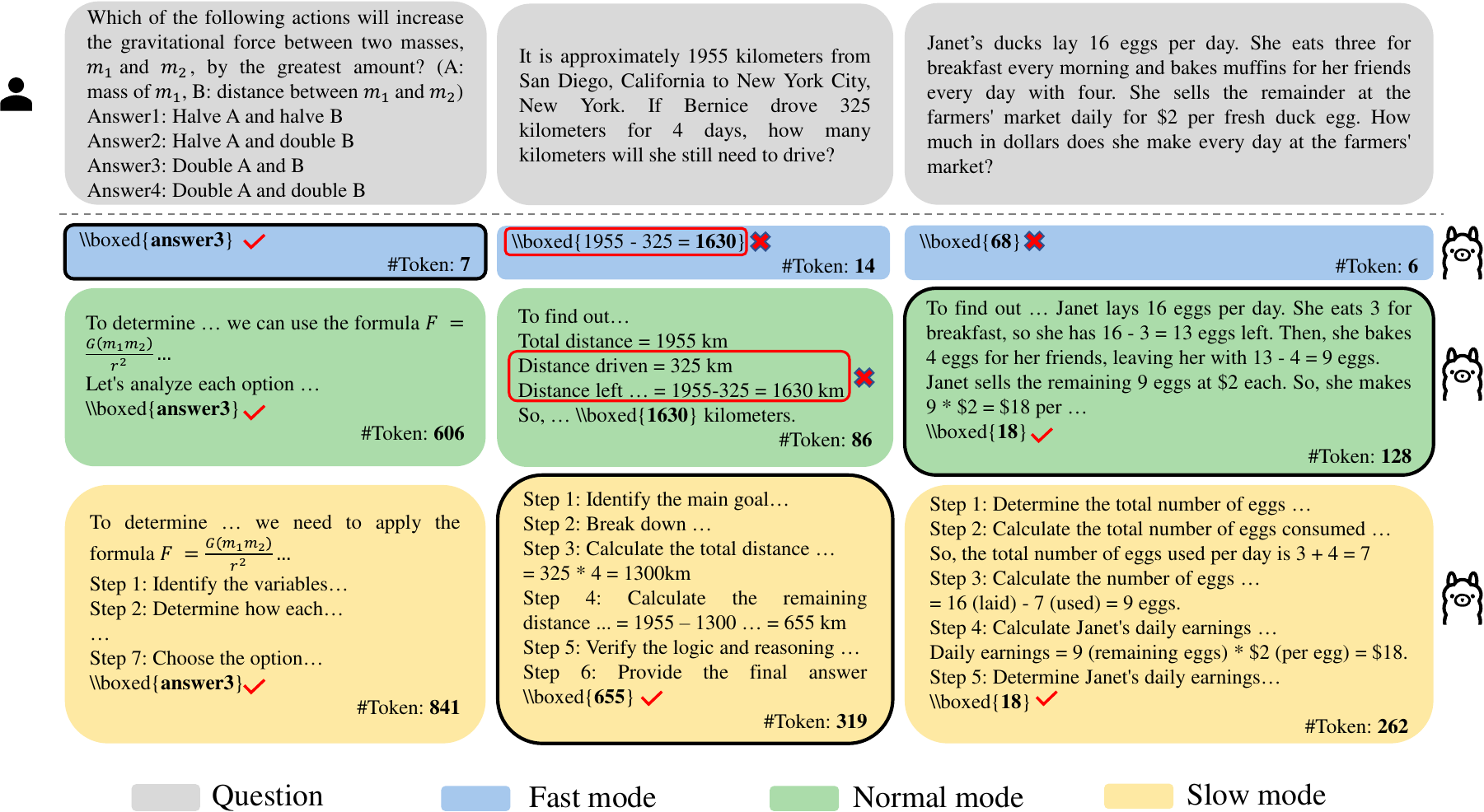}
\caption{Illustrations of  the cases where the Mind Router effectively selects the suitable thinking mode.}
\label{fig:case}
\end{figure*}
\subsection{Ablation Study}
\paragraph{Necessity of normal mode.}
\begin{table}[h!t]
    \centering
    \caption{{The results with ablating the normal mode from the DynamicMind Framework.}} 
    \setlength{\tabcolsep}{6pt}
    \resizebox{\linewidth}{!}{
    \begin{tabular}{lccccc}
    \toprule
    \textbf{Metric} & Math & CommonsenseQA & MMLU & ScienceQA & \textbf{Avg.} \\
    \midrule
    \multicolumn{6}{c}{\textit{Llama-3.1-Instruct-8B w/o normal mode}} \\
    \midrule
    $\Delta \text{ACC} \uparrow$ & +1.74 & -6.45 & +7.19 & +2.12 & +1.15 \\
    $\Delta \text{\#Token} \downarrow$ & +251.12 & -127.97 & +304.74 & +254.77 & +170.67 \\
    \midrule
    \multicolumn{6}{c}{\textit{Qwen-2.5-Instruct-7B w/o normal mode}} \\
    \midrule
    $\Delta \text{ACC} \uparrow$ & -32.74 & -0.57 & -1.88 & -4.87 & -10.02 \\
    $\Delta \text{\#Token} \downarrow$ & -143.08 & -4.75 & +7.39 & -48.18 & -47.16 \\

    \bottomrule
\end{tabular}
}
\label{tab:normal ablation}
\end{table}
The results in Table \ref{tab:normal ablation} underscore the necessity of the normal mode. When the normal mode is removed, Llama's accuracy improves marginally, but this comes with a substantial increase in token consumption, indicating inefficiency. For Qwen, although token consumption slightly decreases, accuracy suffers a significant drop, highlighting a trade-off that undermines performance. These outcomes illustrate that the normal mode is crucial for enhancing LLM thinking efficiency, as it effectively balances computational cost and performance accuracy. Overall, the inclusion of the normal mode provides clear benefits, reinforcing its role as a vital component alongside fast and slow thinking paradigms.

\paragraph{Sensitivity on $\alpha$.}
\vspace{-8pt}
\begin{table}[h!t]
    \centering
    \caption{The results with varying the hyperparameter $\alpha$ in the DynamicMind Framework.}
    \label{tab:alpha_ablation}
    \setlength{\tabcolsep}{6pt}
    \resizebox{\linewidth}{!}{
        \begin{tabular}{lccccccc}
            \toprule
            \textbf{Metric} & & {$\alpha$} & Math & CommonsenseQA & MMLU & ScienceQA & \textbf{Avg.} \\
            \midrule
            \multicolumn{8}{c}{\textit{Llama-3.1-Instruct-8B}} \\
            \midrule
            
            \multirow{2}{*}{$\Delta \text{ACC} \uparrow$}
            & & 0 & +1.27 & +4.62 & -1.42 & +0.73 & +1.30  \\
            & & 2 & +0.11 & +0.05 & -1.00 & -1.82 & -0.66 \\

            \midrule
            
            \multirow{2}{*}{$\Delta \text{\#Token} \downarrow$}
            & & 0 & +65.53 & +95.6 & +30.13 & +21.52 & +53.20  \\
            & & 2 & +4.37 & -2.51 & -3.88 & +7.20 & +1.30 \\
            \midrule
            \multicolumn{8}{c}{\textit{Qwen-2.5-Instruct-7B}} \\
            \midrule
            
            \multirow{2}{*}{$\Delta \text{ACC} \uparrow$}
            & & 0 & +0.11 & +0.05 & -1.00 & -1.82 & -0.66 \\
            & & 2 & +0.29 & -0.13 & -0.24 & -1.37 & -0.36 \\
            \midrule
            
            \multirow{2}{*}{$\Delta \text{\#Token} \downarrow$}
            & & 0 & +4.37 & -2.51 & -3.88 & +7.20 & +1.30 \\
            & & 2 & +4.61 & -0.94 & +1.08 & -4.02 & +0.18 \\
            \bottomrule
        \end{tabular}
    }
\end{table}

The $\alpha$ parameter governs the trade-off between accuracy and computational efficiency defined in Eq. \ref{thinkdensity}. Table \ref{tab: alpha ablation} demonstrates how varying $\alpha$ values influence DynamicMind's performance and token consumption. For instance, setting $\alpha$ to 0 improved accuracy but resulted in increased token consumption for both models. Conversely, higher $\alpha$ values, such as $\alpha=2$, reduced token consumption with minimal impact on accuracy. Thus, these findings underscore the role of $\alpha$ in balancing performance and operational overhead by modulating the distribution of fast, normal, and slow thinking modes.

\subsection{Case Study}
We demonstrate the cases where the Mind Router effectively selects the suitable mode in Figure \ref{fig:case}.

1) Left Case: Fast Mode. In this commonsense question about Newton's law, the Llama model answered correctly across all modes. The fast mode, however, used significantly fewer tokens, saving 599 tokens compared to the normal mode and 834 tokens compared to the slow mode. For 6\% of tasks, the fast mode matched or exceeded the normal mode's accuracy, with an average token savings of 120.52 and a 33.89\% accuracy improvement. This suggests the fast mode is efficient for simpler tasks due to the Llama model's strong zero-shot capabilities.

2) Right Case: Normal Mode. This mathematics problem illustrates the normal mode's strength, where it succeeded without explicit instructions, unlike the fast mode. It saved 134 tokens compared to the slow mode. On average, 14.13\% of tasks saw a 49.95\% accuracy improvement over the fast mode, and 22.33\% maintained accuracy similar to the slow mode while saving 288.45 tokens. This mode balances efficiency and reasoning for moderately complex problems.

3) Middle Case: Slow Mode. In another math problem, both fast and normal modes made errors, while the slow mode succeeded by incorporating crucial temporal information. Despite consuming 233 more tokens than the normal mode, it improved accuracy for 3.11\% of tasks by 53.84\%, at an additional cost of 277.30 tokens. The slow mode excels in complex problems needing detailed attention, enhancing reasoning performance despite higher resource use.

\section{Conclusion}
We introduce DynamicMind, a tri-mode reasoning framework that extends the dual-process paradigm by adding a normal think mode, enabling LLMs to preserve native reasoning while adaptively selecting suitable modes via the Mind Router. Validated across LLMs and domains, DynamicMind enhances performance and reduces computational overhead, with the Thinking Density metric and TMC dataset further advancing adaptive reasoning research.

\section*{Limitations}

Though the proposed DynamicMind has been illustrated as effective, it faces limitations including: 1) First, its balance between computational cost and accuracy may falter in scenarios where either efficiency or cost is exclusively prioritized, reducing the benefits of its complementary modes. 2) Second, the integration of multiple modes introduces slight overhead, which may hinder its use in resource-constrained environments. Future work should focus on refining those aspects to enhance the adaptability and efficiency.

\section*{Ethic Statement}
There is no ethical problem in our study.

\bibliography{custom}
\newpage
\appendix
\onecolumn

\section{Thinking Modes Templates.}
The fast, normal and slow thinking modes in DynamicMind are implemented by controlling the LLM's system prompt. In this section, we provide our carefully designed prompt template for reference.
\subsection{Fast Mode}
\label{app: fast temp}


\begin{prompt}
{\textit{Fast Mode System Prompt}}{template}
\label{exmp_template}
Engage in your \textbf{System 1 Thinking Mode}. You MUST react based on the following rules:

    \quad\quad 1. Respond immediately with your first thought, based purely on gut feeling or your own knowledge.
    
    \quad\quad 2. No reasoning or explanations allowed.
    
    \quad\quad 3. No thinking process needed.
\end{prompt}

\subsection{Normal Mode}
\begin{prompt}
{\textit{Normal Mode System Prompt}}{template}
\label{exmp_template}
You are an AI assistant who provides helpful responses.
\end{prompt}

\subsection{Slow Mode}
\label{app: slow temp}
\begin{prompt}
{\textit{Slow Mode System Prompt}}{template}
\label{exmp_template}
    Engage in your \textbf{System 2 Thinking Mode}. You MUST think step by step based on the following rules:
        
    \quad 1. Problem Decomposition:
    
      \quad\quad - First identify the main goal/question
      
      \quad\quad - Break down into smallest helpful executable units
      
      \quad\quad - Each step focuses on ONE specific sub-problem
    
      \quad\quad - Ensure clear logical flow between steps
      
      \quad\quad - Steps should build towards final solution

    \quad 2. Step Quality and Execution:
    
      \quad\quad - Keep each step focused and concise
      
      \quad\quad - Present core logic and detailed explanations (1-3 sentences) for each step
      
      \quad\quad - Support reasoning with specific context and supporting details
      
      \quad\quad - Continuously evaluate if steps can be broken down further

    \quad 3. Step Verification:
      
      \quad\quad - After each step, verify the logic and reasoning
      
      \quad\quad - Ensure logical consistency and correctness of explanations
      
      \quad\quad - Check if each step effectively contributes to the overall goal
      
      \quad\quad - Address any discrepancies or errors in subsequent steps
      
      \quad\quad - Maintain a smooth transition between steps

    Always remember: Many precise, focused steps > Few broad steps.
\end{prompt}

\section{Details of the TMC Dataset Construction}
\label{app: tmc construction details}
We designed a dataset construction method to train a high-quality Mind Router. Specifically, we first obtain raw data by evaluating the LLM $k$ times on each task data using the fast, normal, and slow thinking modes, respectively. From this raw data, we then filter out task data where the LLM's accuracy across the three thinking modes is consistently below 80\%, as these tasks are considered too difficult for the model. Subsequently, we remove tasks where the average response lengths of the LLM in fast, normal, and slow modes do not satisfy the condition that the response length in fast mode is less than or equal to the length in normal mode, and the length in normal mode is less than or equal to the length in slow mode. This step ensures adherence to the design principles of the three modes. Then, task data with lengths exceeding the maximum sequence length of the MindRouter tokenizer are deleted to ensure that data truncation is not required during MindRouter training. Finally, the think density is calculated using the specified alpha value, and the mode with the highest think density is identified as the optimal mode for the LLM to complete that task.

\section{Proof of Theorem}
\label{app:proof}
We here provide the theorem with proof to demonstrate the pareto optimality of ideal dynamic routing. Because of logrithm operation does not influence the relative numerical value, to make the proof concise, we use the logrithm form of thinking density, i.e., $\hat{E}_m^k(q) = \text{log}(E^k_m(q))$ to complete this proof.

\begin{theorem}[Dynamic Routing Pareto Optimality]
For any question distribution $\mathcal{D}_q$ and tradeoff parameter $\alpha>0$, the optimal router:
\begin{equation}
R^*_q = \arg\max_{m\in\mathcal{M}} \underbrace{\text{Acc}^k_m(q)}_{\text{Accuracy}} + \alpha \underbrace{\text{Eff}^k_m(q)}_{\text{Efficiency}}
\end{equation}
resides on the Pareto frontier and satisfies:
\begin{equation}
\forall m\in\mathcal{M},\ \begin{cases}
\mathbb{E}_q[\text{Acc}^k_{R^*_q}(q)] \geq \mathbb{E}_q[\text{Acc}^k_m(q)]\\
\mathbb{E}_q[\text{Eff}^k_{R^*_q}(q)] \geq \mathbb{E}_q[\text{Eff}^k_m(q)]
\end{cases}
\end{equation}
with strict inequality when $m$ is suboptimal for any $q\in\text{supp}(\mathcal{D}_q)$.
\end{theorem}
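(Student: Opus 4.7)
The plan is to take the pointwise optimality of $R^*_q$ as the single lever, push it to an expectation via integration, derive the Pareto-frontier clause by contradiction, and then confront the coordinate-wise display, which I expect to be where the argument really bites.

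First I would record the per-question fact that, by definition of $R^*_q$, for every $q \in \text{supp}(\mathcal{D}_q)$ and every fixed mode $m \in \mathcal{M}$,
\begin{equation*}
\text{Acc}^k_{R^*_q}(q) + \alpha\,\text{Eff}^k_{R^*_q}(q) \;\geq\; \text{Acc}^k_m(q) + \alpha\,\text{Eff}^k_m(q),
\end{equation*}
with strict inequality whenever $m$ is not an argmax at that $q$. Taking expectations under $\mathcal{D}_q$ preserves the inequality, and strictness survives whenever the set of $q$ on which $m$ is suboptimal has positive measure. This scalarized bound is the workhorse for everything downstream.

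The Pareto-frontier clause then falls out by contradiction. If some alternative router $R'$ (including any constant-mode policy) satisfied $\mathbb{E}_q[\text{Acc}^k_{R'_q}(q)] \geq \mathbb{E}_q[\text{Acc}^k_{R^*_q}(q)]$ and $\mathbb{E}_q[\text{Eff}^k_{R'_q}(q)] \geq \mathbb{E}_q[\text{Eff}^k_{R^*_q}(q)]$ with strictness in at least one coordinate, then because $\alpha>0$ the positive combination of the two expectations would strictly exceed $\mathbb{E}_q[\text{Acc}^k_{R^*_q}(q) + \alpha\,\text{Eff}^k_{R^*_q}(q)]$, contradicting pointwise maximality. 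Hence no routing strategy Pareto-dominates $R^*$, placing $R^*$ on the frontier.

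The hard part will be the two $\geq$ inequalities in the cases block, which are strictly stronger than Pareto non-domination: they assert that $R^*$ weakly coordinate-wise dominates every fixed mode $m$, not merely that no $m$ strictly dominates $R^*$. A direct pointwise argument is not available here, because $R^*$ trades one axis for the other at each $q$; a fixed mode that is uniformly most accurate but poorly efficient can genuinely beat $R^*$ on expected accuracy when $\alpha$ is not small. My plan is to split the support into the set where $R^*_q = m$ (where both inequalities are trivial equalities) and the set where $R^*_q = m' \neq m$, and on the latter seek an invariant of the form $\text{Acc}^k_{m'}(q) \geq \text{Acc}^k_m(q)$ together with $\text{Eff}^k_{m'}(q) \geq \text{Eff}^k_m(q)$. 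Such an invariant requires a monotone-ordering structure across $\mathcal{M}$ that the paper does not make explicit; if one grants the qualitative ordering of Fast, Normal, and Slow summarized in Table~\ref{tab:mode_spec}, the per-block inequalities integrate into the stated expectations. Without that structural hypothesis, the only defensible reading of the cases block is the Pareto-non-domination already secured in step three, with the strictness clause transferred from the pointwise strict gap at any $q$ where $m$ fails to be an argmax.
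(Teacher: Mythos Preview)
Your approach matches the paper's almost step for step: pointwise optimality of $R^*_q$, push to expectation, then rearrange and case-split. The paper writes the scalarized bound as
\[
\mathbb{E}_q[\text{Acc}_{R^*_q}^k(q)] - \mathbb{E}_q[\text{Acc}^k_m(q)] \;\geq\; \alpha\bigl(\mathbb{E}_q[\text{Eff}^k_m(q)] - \mathbb{E}_q[\text{Eff}_{R^*_q}^k(q)]\bigr)
\]
and splits on the sign of the right-hand side, which is the same mechanism as your contradiction argument for the Pareto-frontier clause.

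Your skepticism about the cases block is fully justified, and the paper's own proof exhibits exactly the gap you anticipate. Its two cases are: (i) if $\mathbb{E}_q[\text{Eff}^k_{R^*_q}] < \mathbb{E}_q[\text{Eff}^k_m]$ then $\mathbb{E}_q[\text{Acc}^k_{R^*_q}] > \mathbb{E}_q[\text{Acc}^k_m]$; (ii) otherwise $\mathbb{E}_q[\text{Eff}^k_{R^*_q}] \geq \mathbb{E}_q[\text{Eff}^k_m]$. This disjunction is precisely Pareto non-domination of $R^*$ by $m$---it does \emph{not} deliver the simultaneous pair of inequalities displayed in the theorem. The paper's concluding sentence (``either strictly improves accuracy while matching efficiency, or maintains accuracy while strictly improving efficiency'') misdescribes its own case (i), in which $R^*$ \emph{loses} on efficiency rather than matching it, and case (ii), where nothing at all is established about accuracy. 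Your counterexample sketch---a uniformly most-accurate but inefficient mode beating $R^*$ on expected accuracy for large $\alpha$---is a genuine obstruction, and the paper offers no monotone-ordering hypothesis to rule it out. So your conclusion that the only defensible content is Pareto-frontier membership is correct and in fact sharper than the paper's argument; the defect lies in the theorem statement, not in your plan.
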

\begin{proof}
Given $\hat{E}^k_m(q) = \text{Acc}^k_m(q) + \alpha\text{Eff}^k_m(q)$. For any fixed mode $m$:
\begin{align*}
\mathbb{E}_q[\hat{E}^k_{R^*_q}(q)] &= \mathbb{E}_q\left[\max_{m'\in\mathcal{M}} \hat{E}^k_{m'}(q)\right] \\
&\geq \mathbb{E}_q[\hat{E}^k_m(q)] \quad \text{(pointwise optimality)} \\
&= \mathbb{E}_q[\text{Acc}^k_m(q)] + \alpha\mathbb{E}_q[\text{Eff}^k_m(q)]
\end{align*}
Rearranging terms, we have $\mathbb{E}_q[\text{Acc}_{R^*_q}^k(q)] - \mathbb{E}_q[\text{Acc}^k_m(q)]  
 \geq \alpha\left(\mathbb{E}_q[\text{Eff}^k_m(q)] - \mathbb{E}_q[\text{Eff}_{R^*_q}^k(q)]\right)$. If $\mathbb{E}_q[\text{Eff}^k_{R^*_q}(q)] < \mathbb{E}_q[\text{Eff}^k_m(q)]$, the RHS becomes positive, forcing $\mathbb{E}_q[\text{Acc}^k_{R^*_q}(q)] > \mathbb{E}[\text{Acc}^k_m(q)]$. Otherwise $\mathbb{E}_q[\text{Eff}^k_{R^*_q}(q)] \geq \mathbb{E}_q[\text{Eff}^k_m(q)]$ directly holds. Thus $R^*_q$ either strictly improves accuracy while matching efficiency, or maintains accuracy while strictly improving efficiency. This establishes Pareto dominance over any fixed mode $m$.
\end{proof}

\section{Datasets Infomations}
\label{app: datasets statistics}
In this section, we introduce the dataset information used in our experiments. When constructing the TMC dataset, we incorporated mathematical tasks and CommonsenseQA tasks. Therefore, the datasets corresponding to these two task types include: GSM8K \citep{cobbe2021training} and MATH \citep{DBLP:conf/nips/HendrycksBKABTS21} for mathematical reasoning, as well as Commonsense QA datasets such as BoolQ \citep{clark2019boolq}, PIQA \citep{bisk2020piqa}, SIQA \citep{sap2019socialiqa}, HellaSwag \citep{zellers2019hellaswag}, WinoGrande \citep{sakaguchi2021winogrande}, ARC-e, ARC-c \citep{clark2018think}, and OBQA \citep{mihaylov2018can}. We used the training set portions of these datasets to construct the TMC dataset, and their test set portions served as in-domain datasets for evaluating the LLM's dynamic thinking capabilities. Additionally, we selected the test sets of various subtasks from two other datasets, MMLU \citep{DBLP:conf/iclr/HendrycksBBZMSS21} and ScienceQA \citep{lu2022learn}, as out-of-domain datasets to measure the generalization ability of the LLM's dynamic thinking capabilities.

\section{Implementation Details.}
\label{app: imp details}
In our experiments, we implement the LLM Thinker using two representative open-source models as backbone: Llama-3.1-8B-Instruct and Qwen-2.5-7B-Instruct, selected for their strong instruction-following capabilities. For the Mind Router, we use DeBERTaV3-base. All experiments are conducted on a single NVIDIA A100 GPU. For the LLM Thinker, we set the temperature to 0.6 and a top-p value of 0.9. Additionally, we impose a token generation limit of 128 tokens for Fast Mode, 2048 tokens for Normal Mode, and 4096 tokens for Slow Mode, with the LLM ceasing generation upon reaching these mode-specific token limits.
\end{document}